\newtheorem{proof}{Proof}
\newtheorem{prop}{Proposition}
\title{Dynamical System Inspired Adaptive Time Stepping Controller for \\ Residual Network Families}
\author{Yibo Yang\textsuperscript{\rm 1,2,}\thanks{Equal Contribution},
	Jianlong Wu\textsuperscript{\rm 2,3,$*$}, 
	Hongyang Li\textsuperscript{\rm 2},
	Xia Li\textsuperscript{\rm 2,4},
	Tiancheng Shen\textsuperscript{\rm 1,2},
	Zhouchen Lin\textsuperscript{\rm 2,5,}\thanks{Corresponding Author}\\ 
	\textsuperscript{\rm 1}Center for Data Science, Academy for Advanced Interdisciplinary Studies, Peking University\\
	\textsuperscript{\rm 2}Key Laboratory of Machine Perception (MOE), School of EECS, Peking University\\
	\textsuperscript{\rm 3}School of Computer Science and Technology, Shandong University\\
	\textsuperscript{\rm 4}Key Laboratory of Machine Perception, Shenzhen Graduate School, Peking University\\
	\textsuperscript{\rm 5}Samsung Research China – Beijing (SRC-B)\\
	{\tt \{ibo,jlwu1992,lhy\_ustb,ethanlee,tianchengshen,zlin\}@pku.edu.cn} 
}
\begin{document}

\maketitle

\begin{abstract}
The correspondence between residual networks and dynamical systems motivates researchers to unravel the physics of ResNets with well-developed tools in numeral methods of ODE systems. The Runge-Kutta-Fehlberg method is an adaptive time stepping that renders a good trade-off between the stability and efficiency. Can we also have an adaptive time stepping for ResNets to ensure both stability and performance? In this study, we analyze the effects of time stepping on the Euler method and ResNets. We establish a stability condition for ResNets with step sizes and weight parameters, and point out the effects of step sizes on the stability and performance. Inspired by our analyses, we develop an adaptive time stepping controller that is dependent on the parameters of the current step, and aware of previous steps. The controller is jointly optimized with the network training so that variable step sizes and evolution time can be adaptively adjusted. We conduct experiments on ImageNet and CIFAR to demonstrate the effectiveness. It is shown that our proposed method is able to improve both stability and accuracy without introducing additional overhead in inference phase.
\end{abstract}

\section{Introduction}

Currently, the structure of neural network is mainly developed by hand-crafted design \cite{simonyan2014very,szegedy2015going} or neural architecture searching \cite{baker2016designing}. A theoretical guidance is still lacking for understanding deep network behaviors. One of the most successful architectures, residual network (ResNet) \cite{he2016deep}, introduces identity mappings to enable training a very deep model. ResNets are also used as the base model for a series of computer vision tasks such as scene segmentation \cite{chen2017deeplab}, and action recognition \cite{tran2018closer}. Despite the huge success, the understanding of ResNets is mainly supported by empirical analyses and experimental evidences, other than some attempts from an optimization view \cite{li2018optimization}. Recently, the connection between ResNet and dynamical system has inspired researchers to unravel the physics of residual networks using the rich theories and techniques in differential equations \cite{weinan2017proposal,haber2017stable}. 


In \cite{weinan2017proposal}, it is noted that the Euler method for ordinary differential equations (ODEs) has the same formulation as ResNet iterative updates, and ResNet is viewed as a discrete dynamical system. In this way, the parameter learning in neural networks is translated into its continuous counterpart as an optimal control problem \cite{chen2018neural,li2018icml,li2017maximum,behrmann2018invertible}. Based on the Euler method, some studies introduce multi-step or higher-oder discretization \cite{Lu2018,he2019ode}, and fractional optimal control \cite{jia2019focnet} to construct more powerful network structures for different tasks. Other studies analyze the stability of residual networks and propose more stable and robust structures \cite{haber2017stable,chang2018reversible,ruthotto2018deep,eldad2019}.

An appropriate time stepping for discretization methods of ODEs is crucial for the stability and efficiency \cite{ascher1998computer}. A small step size is able to render an accurate solution, but requires more steps for a fixed evolution time. Chang \emph{et al.} adopt a multi-level strategy to adjust the time step size for ResNet training \cite{chang2017multi}. In a recent study, a small step size is suggested for more stable and robust ResNets \cite{zhang2019towards}. However, an overly small step size would smooth the feature learning. From a dynamical system view, the evolution time for network with a fixed depth and a small step size is too short for the system to evolve from the initial state to the final linearly separable state. In numerical methods for ODEs, adaptive time stepping strategies, such as the Runge-Kutta-Fehlberg (RKF) method \cite{hairer1991solving} as shown in Figure 1, are able to attain a good trade-off between stability and cost. Can we also design an adaptive time stepping for ResNets to ensure both stability and performance?

In this study, we analyze the effects of time stepping on the stability and performance of residual networks, and point out that each step size should be aware of previous steps and the weight parameters in the current step. We develop an adaptive controller, which connects different steps as an LSTM, and takes the parameters of each step as input, to output a set of coefficients that decide the current step size. In doing so, the network is trained with variable step sizes and evolution time, so that our time stepping is optimized jointly to render the network a better stability and performance. More importantly, because the controller is data-independent, our performance gains come with no additional cost in inference phase.

\begin{figure}[t]
	\begin{center}
		\includegraphics[width=1.0\linewidth]{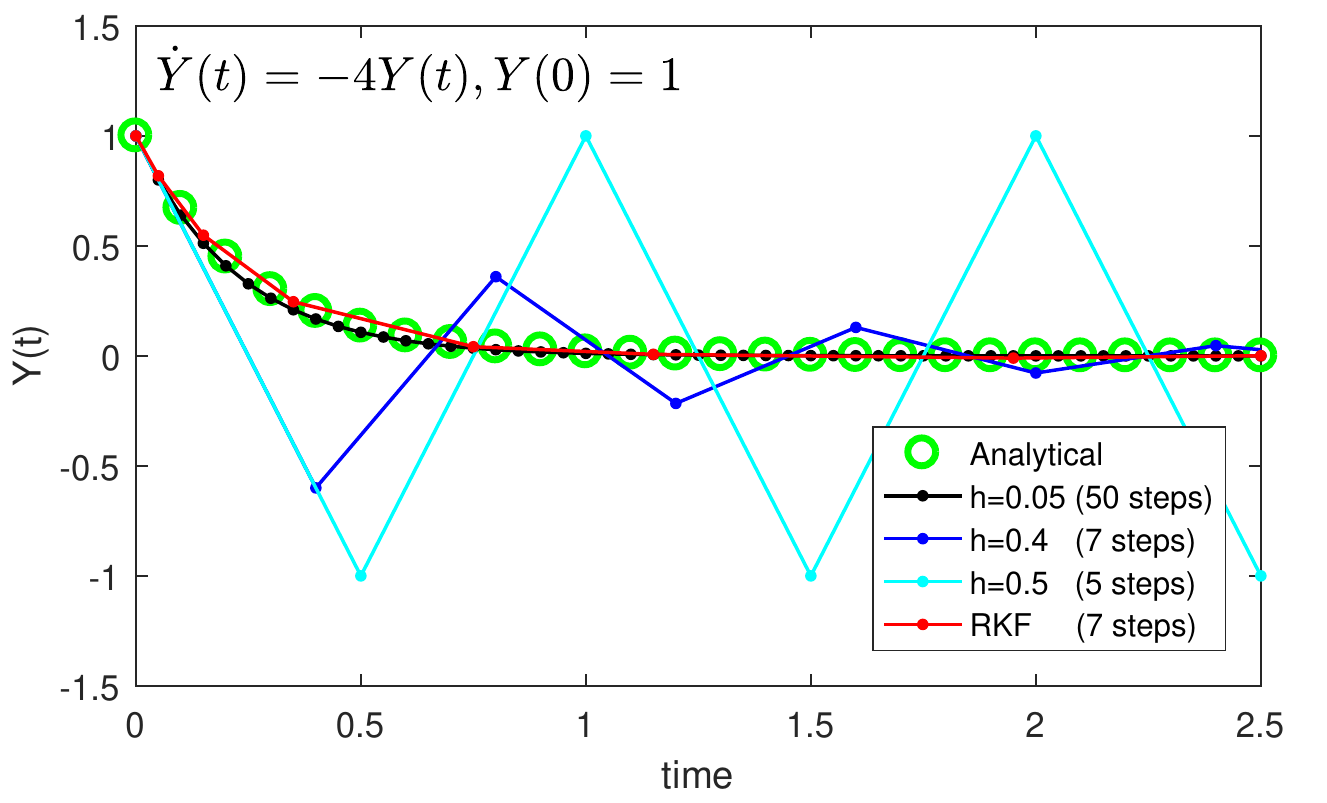}
	\end{center}
	\caption{An initial value problem with different discretization step sizes $h$. The RKF method is able to attain a good trade-off between stability and efficiency, offering a stable solution, while using a small number of steps.}
	\label{fig1}
\end{figure}

The contributions of this study can be listed as follows:
\begin{itemize}
	\item We analyze the correspondence between ODEs and ResNets, establish a stability condition for ResNets with step sizes and weight parameters, and point out the effects of step sizes on the stability and performance. 
	\item Based on our analyses, we develop a self-adaptive time stepping controller to enable optimizing variable step sizes and evolution time jointly with the network training. 
	\item Experiments on ImageNet and CIFAR demonstrate that our method is able to improve both stability and accuracy. The improvements come with no additional cost in inference phase. We also test the application of our method to two non-residual network structures.
\end{itemize}

\section{View ResNet as a Dynamical System}
The forward propagation in a residual network block \cite{he2016deep} can be written as:
\begin{equation}
\mathbf{y}_{j+1} = \mathbf{y}_{j} + h\mathcal{F}(\mathbf{y}_{j}, \mathbf{W}_{j}),\quad j=0,1,...,D-1,
\label{res}
\end{equation}
where $\mathcal{F}$ is the residual function for each step, and $D$ is the network depth. Here we add the $h>0$ in a multiplicative way with the residual branch. Usually a unit of $\mathcal{F}$ has the form of $\mathcal{F}=\sigma(BN(\mathbf{W}_j \mathbf{y}_j))$, where $\sigma$ is the non-linear activation. 
When $h=1$, it reduces to the original form of ResNets. Regarding $h$ as a fixed step size, we see that Eq. (\ref{res}) can be interpreted as the forward Euler method discretization for the following initial value problem (IVP) \cite{weinan2017proposal,haber2017stable}:
\begin{equation}
\mathbf{\dot{y}}(t) = \mathcal{F}(\mathbf{y}(t), \mathbf{W}(t)),\quad \mathbf{y}(0) = \mathbf{y}_0,
\label{ode}
\end{equation}
where features $\mathbf{y}(t)$ and parameters $\mathbf{W}(t)$ are viewed in their continuous limit as a function of time $t\in[0,T]$. The evolution time $T$ corresponds to the network depth $D$. In doing so, residual networks are interpreted as the discrete counterpart of dynamical systems, and parameter learning is equivalent to solving an optimal control problem with respect to the ODE system in Eq. (\ref{ode}) \cite{chen2018neural,li2018icml,haber2018learning}. Related studies use the stability condition of the forward Euler method to analyze the stability of ResNets and propose better structures \cite{chang2018reversible,eldad2019}. We show that time stepping is crucial for the stability and performance of ResNets.

\subsection{Time Stepping for the Euler method}

Given a linear problem $\dot{y}(t)=\lambda y(t),t\in[0,T]$, we have its forward Euler's discretization as $y_{j+1}=y_{j} + h\lambda y_j,j=0,...,N-1$, where $h$ is the fixed step size and $T=Nh$. Assuming that $y_0^{\epsilon}=y_0 + \epsilon$ is the initial value suffered from a perturbation $\epsilon>0$, we have:
\begin{equation}
|y_N^{\epsilon}-y_N|=|1+h\lambda|^N\epsilon,
\end{equation}
which indicates that when $N\rightarrow\infty$, the perturbation is controllable if $|1+h\lambda|\le1$. As a more general case, the forward Euler method for non-linear system Eq. (\ref{ode}) is stable when the following condition holds \cite{ascher1998computer}:
\begin{equation}
\max |1+h\lambda_i(\mathbf{J}(t))|\le 1,
\label{jacob}
\end{equation}
where $\lambda_i$ denotes the $i$-th eigenvalue of the Jacobian matrix defined as $\mathbf{J}(t)=\nabla_{\mathbf{y}} \mathcal{F}(\mathbf{y}(t), \mathbf{W}(t))$. From the stability condition, we can see that a small step size $h$ is required to obtain a stable solution. In practical implementations, the step size $h$ should satisfy a stable solution, while being as large as possible to reduce the amount of iterative steps for a fixed evolution time $T$. Thus, the choice for a time stepping scheme is crucial for both stability and efficiency. 

The Runge-Kutta-Fehlberg (RKF) method \cite{hairer1991solving} as an adaptive time stepping is able to attain a good trade-off between the stability and efficiency. It uses the $p$-th (usually $p$=4) order Runge-Kutta method to compute the current solution $y_{j+1}$, and the local truncation error is:
\begin{equation}
|y_{j+1}-y(t_{j+1})|=O(\Delta t^{p+1}_j),
\end{equation}
where $\Delta t_j$ is the current step size. The $y(t_{j+1})$ is approximated by the $p+1$-th order form, denoted as $\hat{y}_{j+1}$. Then the new step size can be adjusted as:
\begin{equation}
\Delta t_{j+1}=k\times\Delta t_j\times\left(\frac{Tol}{|y_{j+1}-\hat{y}_{j+1}|}\right)^{1/(p+1)},
\end{equation}
where $k$ is a factor and $Tol$ is a tolerance error. The method adaptively increases or reduces the next step size according to the agreement between $y_{j+1}$ and $\hat{y}_{j+1}$. 

As a simple example, we consider the problem , $\dot{y}(t)=-4y(t), y(0)=1$, whose analytical solution can be easily derived as $y(t)=\exp(-4t)$. As shown in Figure \ref{fig1}, the adaptive method RKF is able to offer a stable solution, but requires significantly less number of steps than a small step size for the evolution period. If we use a large step size to reduce the number of steps, the solution will be unstable. Thus, an adaptive time stepping scheme is crucial for the stability and efficiency of solution to ODE systems.

\subsection{Time Stepping for ResNets}

As a discrete counterpart of dynamical systems, ResNet has similar behaviors to the discretization method for ODEs. We show that time stepping causes similar effects on the stability and performance of ResNets. 

\begin{prop}
	Consider a ResNet with $D$ residual blocks and variable step sizes $\Delta t_j$ for each step. Let $\epsilon$ be the perturbation coming from noise or adversary and satisfies $||\mathbf{y}^{\epsilon}_0-\mathbf{y}_0||=\epsilon$. We have:
	\begin{equation}
	||\mathbf{y}^{\epsilon}_D-\mathbf{y}_D||\le \epsilon \cdot \prod_{j=0}^{D-1}(1+||\mathbf{W}_j||_2\Delta t_j),
	\label{res_stab}
	\end{equation}
	where $||\mathbf{W}_j||_2$ denotes the spectral norm of weight matrix in each residual block.
\end{prop}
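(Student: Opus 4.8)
The plan is to argue by induction on the residual-block index $j$, tracking how the perturbation $\delta_j := ||\mathbf{y}^{\epsilon}_j - \mathbf{y}_j||$ is amplified by a single pass through a block. The base case is nothing but the hypothesis, $\delta_0 = ||\mathbf{y}^{\epsilon}_0 - \mathbf{y}_0|| = \epsilon$. The entire inductive content is the one-step estimate $\delta_{j+1} \le (1 + ||\mathbf{W}_j||_2\,\Delta t_j)\,\delta_j$; once this holds, chaining the $D$ inequalities for $j = 0, 1, \dots, D-1$ and substituting $\delta_0 = \epsilon$ yields exactly Eq.~(\ref{res_stab}).

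To get the one-step estimate, I would first subtract the two forward-propagation recursions. Both the clean and the perturbed trajectories obey the update $\mathbf{y}_{j+1} = \mathbf{y}_j + \Delta t_j\,\mathcal{F}(\mathbf{y}_j, \mathbf{W}_j)$ of Eq.~(\ref{res}) with \emph{the same} weights $\mathbf{W}_j$, so their difference satisfies $\mathbf{y}^{\epsilon}_{j+1} - \mathbf{y}_{j+1} = (\mathbf{y}^{\epsilon}_j - \mathbf{y}_j) + \Delta t_j\big(\mathcal{F}(\mathbf{y}^{\epsilon}_j, \mathbf{W}_j) - \mathcal{F}(\mathbf{y}_j, \mathbf{W}_j)\big)$. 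Taking norms, using the triangle inequality and $\Delta t_j > 0$, we obtain $\delta_{j+1} \le \delta_j + \Delta t_j\,||\mathcal{F}(\mathbf{y}^{\epsilon}_j, \mathbf{W}_j) - \mathcal{F}(\mathbf{y}_j, \mathbf{W}_j)||$, which reduces the problem to bounding the residual-branch increment.

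The crux — and the step I expect to be the real obstacle — is to show $||\mathcal{F}(\mathbf{y}^{\epsilon}_j, \mathbf{W}_j) - \mathcal{F}(\mathbf{y}_j, \mathbf{W}_j)|| \le ||\mathbf{W}_j||_2\,\delta_j$. Writing the unit as $\mathcal{F}(\mathbf{y}, \mathbf{W}_j) = \sigma(\mathrm{BN}(\mathbf{W}_j \mathbf{y}))$, I would invoke that the activation $\sigma$ (e.g.\ ReLU) is $1$-Lipschitz and treat the batch-normalization map as $1$-Lipschitz as well — or, in the simplest reading, fold its affine action into a reparameterization so that effectively $\mathcal{F}(\mathbf{y}, \mathbf{W}_j) = \sigma(\mathbf{W}_j\mathbf{y})$. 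Then $||\mathcal{F}(\mathbf{y}^{\epsilon}_j, \mathbf{W}_j) - \mathcal{F}(\mathbf{y}_j, \mathbf{W}_j)|| \le ||\mathbf{W}_j(\mathbf{y}^{\epsilon}_j - \mathbf{y}_j)|| \le ||\mathbf{W}_j||_2\,\delta_j$, the last step being the definition of the spectral norm. Making the BN contribution fully rigorous is the delicate point: its scaling coefficients depend on running statistics and can in principle exceed one, so one either assumes that effective factor is absorbed into $||\mathbf{W}_j||_2$ or restricts to inference-mode BN with fixed affine parameters.

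Substituting the bound back gives $\delta_{j+1} \le \delta_j + \Delta t_j\,||\mathbf{W}_j||_2\,\delta_j = (1 + ||\mathbf{W}_j||_2\,\Delta t_j)\,\delta_j$, completing the induction. Unrolling from $j = 0$ to $j = D - 1$ with $\delta_0 = \epsilon$ then produces $||\mathbf{y}^{\epsilon}_D - \mathbf{y}_D|| \le \epsilon\prod_{j=0}^{D-1}(1 + ||\mathbf{W}_j||_2\,\Delta t_j)$, as claimed. I would close by noting that when all $\Delta t_j = h$ this collapses to at most $\epsilon(1 + h\max_j ||\mathbf{W}_j||_2)^D$, which mirrors the $|1 + h\lambda|^N$ amplification factor of the Euler analysis in Eq.~(\ref{jacob}) and makes explicit that shrinking the $\Delta t_j$ (adaptively, per step) is what keeps the perturbation controllable.
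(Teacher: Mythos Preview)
Your proposal is correct and follows essentially the same route as the paper's proof: subtract the two recursions, apply the triangle inequality to get $\delta_{j+1}\le\delta_j+\Delta t_j\,\|\mathcal{F}(\mathbf{y}^{\epsilon}_j,\mathbf{W}_j)-\mathcal{F}(\mathbf{y}_j,\mathbf{W}_j)\|$, bound the residual increment by $\|\mathbf{W}_j\|_2\,\delta_j$, and chain. The only cosmetic difference is that the paper justifies the last bound via a local linearization of ReLU (writing it as a diagonal $\{0,1\}$ mask $\mathbf{A}$ with $\|\mathbf{A}\|_2\le1$, valid when the perturbation does not flip activation signs), whereas you invoke the global $1$-Lipschitz property of $\sigma$---your version is in fact the cleaner of the two.
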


\begin{proof}
	See Appendix A for its proof.$\hfill\blacksquare$  
\end{proof}

We note that the robustness of ResNets to perturbation is affected by the network depth, spectral norm of each weight matrix, and each step size. Eq. (\ref{res_stab}) shows that the stability of ResNets is conditioned on each layer in a stacking way, which is consistent with previous findings \cite{veit2016residual}. For each layer, it is suggested that the weight matrix should have a small spectral norm. Since $||\mathbf{W}_j||_2^2\le||\mathbf{W}_j||_F^2$, weight decay that regularizes the Frobenius norm is effective to train models with robustness to input perturbations. However, it shrinks the weight matrix in all directions, and discards information of input features. Some studies propose spectral norm regularizer \cite{yoshida2017spectral,miyato2018spectral} or Jacobian regularizer \cite{sokolic2017robust} to improve the stability. 

The term in Eq. (\ref{res_stab}) has a similar form to Eq. (\ref{jacob}), and also indicates that a small step size $\Delta t_j$ should be chosen for ResNets' stability. Nevertheless, as pointed out by \cite{zhang2019towards}, an overly small step size would smooth the feature learning. Denoting loss function as $L$, in ResNets with variable step sizes $\Delta t$, we have the gradient backpropagated to layer $\mathbf{y}_n$ as:
\begin{equation}
\frac{\partial L}{\partial \mathbf{y}_n} = \frac{\partial L}{\partial \mathbf{y}_D}\left[\mathbf{1}+\frac{\partial}{\partial\mathbf{y}_n} \sum_{i=n}^{D-1}\mathcal{F}(\mathbf{y}_i, \mathbf{W}_i)\Delta t_i\right],
\end{equation}
which shows that the backpropagated information for each layer comes from two terms. When the step size $\Delta t_i$ is too small, the second term would vanish, and gradients for each layer would be the same as $\frac{\partial L}{\partial \mathbf{y}_D}$. This would make the network inefficient and lacking in representation power. From the dynamical system perspective, if the network with a fixed depth has a small step size, its corresponding optimal control problem would have a short period of evolution time, which increases the difficulty of transforming the data space from the initial state to the expected linearly separable state. 

Therefore, similar to the discretization for ODE systems, ResNets also need an adaptive time stepping to enable a good trade-off between the stability and performance. A related study \cite{zhang2018smooth} proposes to optimize step sizes $\Delta t_j$ as explicit parameters. 
We note that independent step sizes are not self-aware and cannot be adjusted adaptively. Inspired by our analyses, we propose a self-adaptive time stepping controller that is dependent on the weight matrices and aware of previous steps.



\section{Proposed Methods}
In this section, we first introduce our design in the optimal control view, and then describe the components of our self-adaptive time stepping controller. Finally, we analyze the complexity of our method and show implementation details. 

\subsection{The Optimal Control View}
In our analyses, we note that the product of step size and spectral norm of weight matrix in each layer decides the stability. Directly calculating the spectral norm requires the SVD decomposition, which makes the training inefficient. Here we introduce a controller that outputs the current step size $\Delta t_j$ dependent on the convolution parameters $\mathbf{w}_j$ in this layer. Besides, similar to the design of RKF method, the new step size should remember previous step sizes to avoid sharp increment or reduction. In line with these views, denoting the controller as $\Theta$ parameterized by $\{\theta_d\}^{D-1}_{d=1}$, we have the corresponding optimal control problem as:
\begin{align}
\min_{\mathbf{w},\theta}\quad & J=\frac{1}{S}\sum^S_{s=1}\Phi(\mathbf{y}_s(T), \mathbf{y}^*_s)+\sum^{D-1}_{d=1}R(\mathbf{w}(t_d),\theta(t_d)),\label{opti}\\
s.t. \quad & \mathbf{y}_s(t_{d+1}) =\mathbf{y}_s(t_{d})+\mathcal{F}\left(\mathbf{y}_s(t_{d}), \mathbf{w}(t_d)\right)\Delta t_d \notag\\
& \Delta t_d =\Theta\left(\mathbf{w}(t_d);\Delta t_1,...,\Delta t_{d-1};\theta(t_d)\right) \notag\\
& t_0=0,\ t_{d+1}=t_d + \Delta t_d,\ \mathbf{y}_s(0)=\mathbf{y}_s\notag\\
& T=t_D=\sum^{D-1}_{d=0}\Delta t_d,\ d=0,1,...,D-1\notag
\end{align}
where $\Phi$ is the loss function, $R$ is the regularization, $\mathbf{y}^*_s$ is the label of input image $\mathbf{y}_s$, and $S$ is number of samples. The optimal control problem in discrete time \cite{kwakernaak1972linear} looks for the best control parameters $\{\mathbf{w},\theta\}$ for this dynamical system that aims to minimize the cost $J$. From Eq. (\ref{opti}) we can see that, the system has variable step sizes $\Delta t_j$ and evolution time $T$. In implementations, the controller is jointly optimized with the network training, so that an optimal time stepping can be searched to render the network better stability and performance.


\subsection{Self-adaptive Time Stepping Controller}

\begin{figure}[t]
	\centering
	\includegraphics[width=1.0\linewidth]{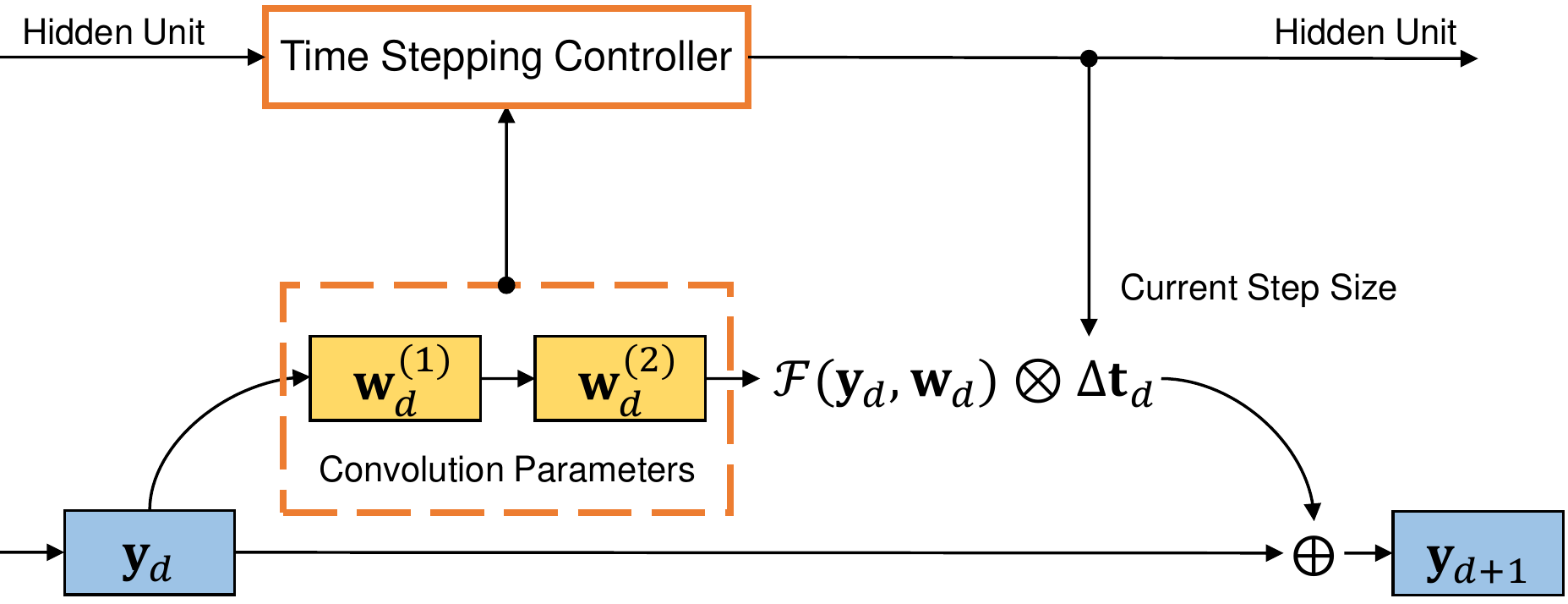}
	\caption{An illustration of our proposed time stepping controller, which is dependent on the convolution parameters, and connects different steps to be aware of previous steps. }
	\label{fig2}
\end{figure}

Since the time stepping controller takes the convolution parameters as input and is aware of previous steps, we parametrize the controller as an LSTM that connects different steps. In implementations, we split the step size as a vector $\Delta \mathbf{t}$ with the same channel number as the feature in current step. The product between residual branch and step size is replaced with a channel-wise multiplication. We find this helps to improve the training stability and accuracy. 

An illustration of our method is shown in Figure \ref{fig2}. Denote the convolution parameters of the $d$-th layer as $\mathbf{w}_d\in\mathbb{R}^{k_1\times k_2\times C_1 \times C_2}$, where $k_1$, $k_2$ are the kernel sizes, and $C_1$, $C_2$ are the number of channels for the input and output, respectively. In order to acquire representative information of the residual function, we average $\mathbf{w}_d$ by projecting along the input dimension and get $\mathbf{\bar{w}}_d\in\mathbb{R}^{k_1k_2C_2}$ after reshaping. We concatenate them if there are multiple convolution layers in the residual branch. The $\mathbf{\bar{w}}_d$ first goes through a transformation layer to reduce the dimension into a reduction of $r$ of the channels in the current layer:
\begin{equation}
\mathbf{x}_d=g_{in}(W_{in}\cdot\mathbf{\bar{w}}_d+b_{in})\in\mathbb{R}^{\frac{C_2}{r}},
\end{equation}
where $g_{in}$ refers to the ReLU function, $W_{in}$ is the transformation matrix, $b_{in}$ is the bias vector, and $\mathbf{x}_d$ is the input for LSTM. The hidden unit of LSTM has the same dimension as $\mathbf{x}_d$, \emph{i.e.} $\mathbf{h}^{l-1}\in\mathbb{R}^{\frac{C_2}{r}}$. The interaction between inner states and gates at $d$-th layer goes through the following steps:
\begin{equation}
\begin{aligned}
&\mathbf{i}_d=\sigma\left(W_i\cdot\left[\mathbf{h}_{d-1},\mathbf{x}_d\right] + b_i \right), \\
&\mathbf{f}_d=\sigma\left(W_f\cdot\left[\mathbf{h}_{d-1},\mathbf{x}_d\right] + b_f \right), \\
&\mathbf{g}_d={\rm tanh}\left(W_g\cdot\left[\mathbf{h}_{d-1},\mathbf{x}_d\right] + b_g \right), \\
&\mathbf{o}_d=\sigma\left(W_o\cdot\left[\mathbf{h}^{d-1},\mathbf{x}_d\right] + b_o \right) , \\
&\mathbf{c}_d=\mathbf{f}_d \odot {\mathbf{c}_{d-1}}+\mathbf{i}_d\odot\mathbf{g}_d, \\
&\mathbf{h}_d=\mathbf{o}_d \odot {\rm tanh}\left(\mathbf{c}_d\right), 
\end{aligned}
\label{lstm}
\end{equation}
where $\sigma$ refers to the sigmoid function and $\odot$ denotes element-wise multiplication. After the above processes, another fully connected layer transforms the hidden unit $\mathbf{h}_d$ into the step size vector $\Delta \mathbf{t}_d$:
\begin{equation}
\Delta \mathbf{t}_d = g_{out}(W_{out}\cdot \mathbf{h}_d + b_{out}),
\end{equation}
where $W_{out}\in\mathbb{R}^{C_2\times\frac{C_2}{r}}$, $b_{out}\in\mathbb{R}^{C_2}$, and $g_{out}$ denotes the non-linear sigmoid function that restricts the elements in the current step size between range $(0,1)$. The forward propagation in the current step is:
\begin{equation}
\mathbf{y}_{d+1}=\mathbf{y}_d+\mathcal{F}(\mathbf{y}_d,\mathbf{w}_d)\otimes\Delta \mathbf{t}_d,
\label{res_prop}
\end{equation}
where $\otimes$ represents the channel-wise multiplication. We have one controller for each size stage in ResNet, and all parameters above keep shared in the same size stage. 

We note that the channel-wise attention technique \cite{hu2017} has a similar formulation to Eq. (\ref{res_prop}). Our method differs from theirs in that our controller is data independent and does not rely on the feature space. What our time stepping aims to optimize is part of the structural information. When training finishes, our method discards the controller and has no considerable addition cost in inference phase (except a little calculation of multiplying step sizes). This cannot be realized by attention methods that are feature dependent. Besides, our experiments show that our method is compatible with the attention method.

\begin{table}[!t]
	\renewcommand\arraystretch{1.5}
	\caption{Parameter complexity introduced by the three methods in training and inference phases. $k_1$ and $k_2$ denote the kernel sizes of the convolution parameters.}
	\begin{center}
		\begin{tabular}{l|c|c}
			\hline
			Methods & Training & Inference \\ 
			\hline
			TSC$_{indp}$  & $\sum_{b=1}^{B}L_bC_b$ & $\sum_{b=1}^{B}L_bC_b$\\ 
			\hline
			TSC$_{2fc}$ & $\sum_{b=1}^{B}L_b\frac{C_b^{2}}{r}(1 + k_{1}k_{2})$ & $\sum_{b=1}^{B}L_bC_b$\\
			\hline
			TSC$_{LSTM}$ & $\sum_{b=1}^{B}\frac{C_b^{2}}{r}(1+ \frac{8}{r} + k_{1}k_{2})$ & $\sum_{b=1}^{B}L_bC_b$\\
			\hline
		\end{tabular}
	\end{center}
	\label{complex}
\end{table}

\subsection{Complexity Analysis}

We denote the \textbf{t}ime \textbf{s}tepping \textbf{c}ontroller using LSTM as TSC$_{LSTM}$. In addition to this structure, we also consider two other versions as its counterparts. We analyze their complexities in this subsection and compare their performance in experiments. The first one removes the LSTM layers Eq. (\ref{lstm}), and only keeps the input and output transformation layers but does not share their parameters. It is similar to the two fully-connected layers module in \cite{hu2017}. We denote this version as TSC$_{2fc}$. This structure is dependent on the convolution parameters but not aware of previous steps. The other one does not use a controller, and only introduces the step sizes $\{\Delta \textbf{t}_d\}^{D-1}_{d=0}$ as explicit parameters that are independent of weight matrices and previous steps. This version is denoted as TSC$_{indp}$.

When training finishes, only the optimized step sizes should be stored and the controller can be removed, which leads to little additional cost in inference phase. As for training complexity, TSC$_{indp}$ has the same cost since it does not have a controller. As for TSC$_{LSTM}$ and TSC$_{2fc}$, TSC$_{LSTM}$ consumes less parameters because of the weight-sharing in LSTM, while TSC$_{2fc}$ consumes less computation because it does not have the LSTM layers in Eq. (\ref{lstm}). Assuming that the network has $B$ feature size stages (as an example, $B=3$ for CIFAR and $4$ for ImageNet). There are $L_b$ layers in the $b$-th stage, and $C_b$ is the number of channels per layer. We compare the three methods' parameter complexity of training and inference phase in Table \ref{complex}. We will compare their performance and overhead in experiments.

\subsection{Implementations}

\begin{table}[!t]
	\small
	\renewcommand{\arraystretch}{1.3}
	\caption{Comparison between different controllers on ImageNet. They are all based on ResNet-50 structure. Our methods do not have much additional cost in inference phase.}
	\begin{center}
		\begin{tabular}{l|c|c|c}
			\hline
			\multirow{2}*{Methods} & top-1 & Params(M) & GFLOPs\\
			~ & \small (err.) & \small \emph{(train / infer)} & \small \emph{(train / infer)} \\
			\hline
			ResNet (reported) & $24.70$ & $25.56$ & $3.86$\\
			\hline
			ResNet (ours) & $24.42$ & $25.56$ & $3.86$\\
			\hline
			TSC$_{indp}$-ResNet & $24.12$ & $25.57 / 25.57$ & $3.86 / 3.86$\\
			\hline
			TSC$_{2fc}$-ResNet & $23.90$ & $31.58 / 25.57$ & $3.88 / 3.86$\\
			\hline
			TSC$_{LSTM}$-ResNet & $\textbf{{23.63}}$ & $27.83 / 25.57$ & $ 3.89 / 3.86$\\
			\hline
		\end{tabular}
	\end{center}
	\label{ablation1}
\end{table}

In experimental section, we test our proposed methods on the ResNet and its variants. Here we show the details of our implementation. For ResNets structures without bottleneck, there are two convolution layers in each residual block, and we have $k_1=k_2=3$ for the kernel sizes. We concatenate these two parts of projected parameters in layer $d$ by $\mathbf{\bar{w}}_d=[ \mathbf{\bar{w}}_d^{(1)}, \mathbf{\bar{w}}_d^{(2)} ]\in\mathbb{R}^{2k_1k_2C_2}$, where $C_2$ denotes the output channels in this layer. For ResNets structures with bottleneck, there are three convolution layers in each residual block, and the kernel size is $1$ for the first and third layers, and $3$ for the bottleneck layer. We only concatenate the first and third projected parameters by $\mathbf{\bar{w}}_d=[\mathbf{\bar{w}}_d^{(1)}, \mathbf{\bar{w}}_d^{(3)}]\in\mathbb{R}^{C_2+C_2/4}$, where $C_2/4$ is the number of bottleneck channels. We set the reduction $r$ to $4$ for ResNets without bottleneck, and $8$ for ResNets with bottleneck. An illustration of our method on ResNet-34 (without bottleneck) and ResNet-50 (with bottleneck) is shown in Appendix B.

\section{Experiments}

We conduct experiments on CIFAR-10, CIFAR-100, and ImageNet to validate our time stepping controller on ResNet and its variants. We also test the application of our method to two non-residual network structures.

\subsection{Datasets and Training Details}

\subsubsection{Datasets} For training sets of the ImageNet dataset, we adopt the standard data augmentation scheme \cite{he2016deep}. A $224\times 224$ crop is randomly sampled from the image or its horizontal flip. The input images are normalized by mean and standard deviation for each channel. All models are trained on the training set and we report the single center crop error rate on the validation set. For CIFAR-10 and CIFAR-100, we adopt a standard data augmentation scheme by padding the images 4 pixels filled with 0 on each side and then randomly sampling a $32\times32$ crop from each image or its horizontal flip. The images are normalized by mean and standard deviation.

\subsubsection{Training Details} We train our models using stochastic gradient descent (SGD) with the Nesterov momentum 0.9 and weight decay $10^{-4}$. The parameters are initialized following \cite{he2015delving}. For the ImageNet dataset, we train for 100 epochs with an initial learning rate of 0.1, and drop the learning rate every 30 epochs. A mini-batch has 256 images among 8 GPUs. For CIFAR-10 and CIFAR-100, we train for 300 epochs with a mini-batch of 64 images. The learning rate is set to 0.1 and divided by 10 at $50\%$ and $75\%$ of the training procedure. For our results on CIFAR, we run for 3 times with different seeds and report mean values.

\subsection{Ablation Study}

\begin{figure}[t]
	\begin{center}
		\centering
		\includegraphics[width=0.95\linewidth]{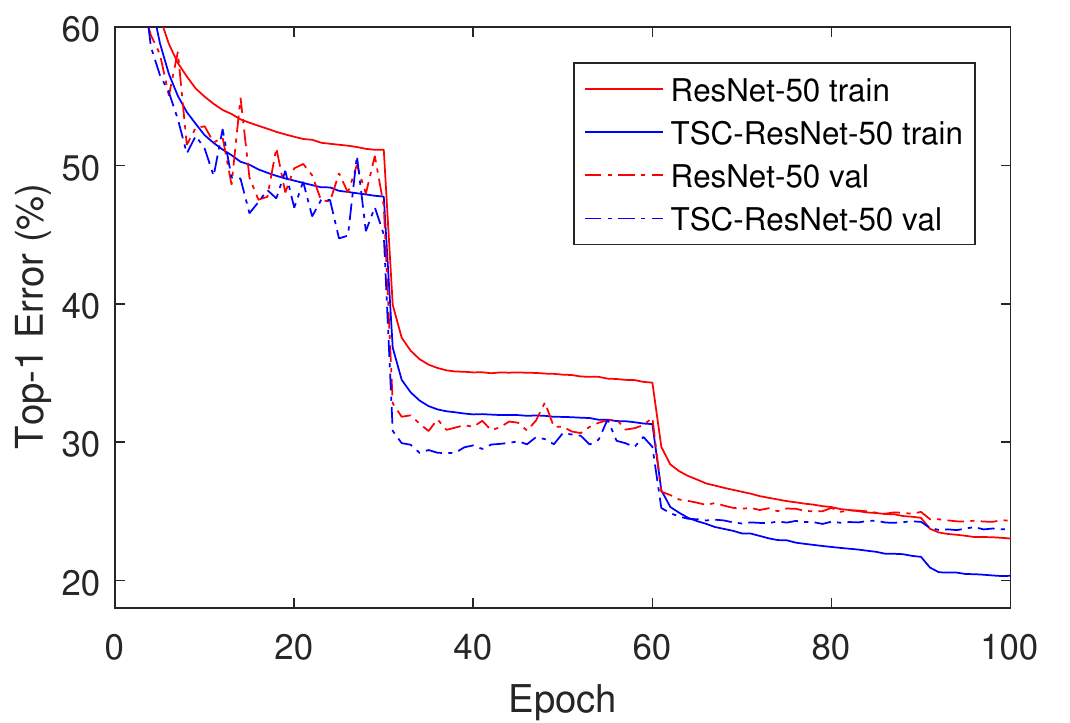}
	\end{center}
	\caption{Top-1 error rate training curves of the baseline ResNet-50 and our TSC$_{LSTM}$-ResNet-50 on ImageNet.}
	\label{fig3}
\end{figure}

In order to test the effectiveness of our proposed LSTM time stepping controller TSC$_{LSTM}$, we conduct experiments on ImageNet and compare with the two counterparts, TSC$_{indp}$ and TSC$_{2fc}$. We perform our methods with ResNet-50. As shown in Table \ref{ablation1}, our re-implementation has a slightly better performance than reported. When armed with TSC$_{indp}$, ResNet has a small performance improvement, due to the introduced step sizes as explicit parameters. It reveals that a trainable step size benefits the ResNet performance. TSC$_{2fc}$ and TSC$_{LSTM}$ have larger improvements, which shows that the controller dependent on the convolution parameters contributes to a better performance. TSC$_{LSTM}$ is further aware of previous steps, because of the memory brought by LSTM, and has an improvement of 0.79\% top-1 accuracy than baseline. The training curves of our TSC$_{LSTM}$ and baseline are compared in Figure \ref{fig3}. It is shown that our method has a superiority during the whole training procedure. The ablation study demonstrates the effectiveness of our design that the step sizes should be dependent on convolution parameters and aware of previous steps. We use the LSTM controller TSC$_{LSTM}$ for our later experiments.

Besides, the optimized step sizes belong to part of the structural information and are not data dependent. Thus, when training finishes, only the step sizes should be stored, and the additional overhead introduced by our controller can be spared in inference. As shown in Table \ref{ablation1}, compared with baseline, these three methods have little additional cost in inference. For training, TSC$_{indp}$ has the same parameters and computation as inference because it does not use a parameterized controller. TSC$_{LSTM}$ consumes less parameters but more computation than TSC$_{2fc}$ in training. It is consistent with our analysis in prior section.

\subsection{Improving the Performance}

\begin{table*}[!th]
	\renewcommand{\arraystretch}{1.3}
	
	\caption{Top-1 error rates on ImageNet with and without our proposed time stepping controller. The numbers in brackets show the performance improvement over the re-implemented baselines. The left and right numbers of Params or GFLOPs columns show the overhead of our methods during training and inference phase respectively. All baseline methods are not able to reduce the overhead during inference phase, so there is a single value on the corresponding columns.}
	\begin{center}
		\begin{tabular}{l|c|c|c|c|c|c}
			\hline
			\multirow{3}*{Models} & \multicolumn{3}{c|}{re-implementation} & \multicolumn{3}{c}{with our time stepping controller}\\
			\cline{2-7}
			~ & \multirow{2}*{Error. (\%)} & \multirow{2}*{Params(M)} & \multirow{2}*{GFLOPs} & Error. (\%) & Params(M) & GFLOPs\\
			~ & ~ & ~ & ~ & \small (gain) & \small \emph{(train / infer)} &  \small \emph{(train / infer)}\\
			\hline
			\hline
			ResNet-18 & 29.41 & 11.69 & 1.81 & 28.80 ${\textbf{(0.61)}}$ & 13.52 / \textbf{11.69} & 1.81 / \textbf{1.81} \\
			\hline
			ResNet-34 & 26.03 & 21.80 & 3.66 & 25.39 ${\textbf{(0.64)}}$ & 23.63 / \textbf{21.80} & 3.66 / \textbf{3.66} \\
			\hline
			ResNet-50 & 24.42 & 25.56 & 3.86 & 23.63 ${\textbf{(0.79)}}$ & 27.83 / \textbf{25.57} & 3.89 / \textbf{3.86} \\
			\hline
			ResNet-101 & 22.94 & 44.55 & 7.58 & 22.24 ${\textbf{(0.70)}}$ & 46.82 / \textbf{44.58} & 7.64 / \textbf{7.58} \\
			\hline
			\hline
			ResNeXt-50 & 22.84 & 25.03 & 3.77 & 22.23 ${\textbf{(0.61)}}$ & 27.30 / \textbf{25.04} & 3.80 / \textbf{3.77} \\
			\hline
			ResNeXt-101 & 21.88 & 44.18 & 7.51 & 21.17 ${\textbf{(0.71)}}$ & 46.45 / \textbf{44.21} & 7.57 / \textbf{7.51} \\
			\hline
			\hline
			SENet-50 & 23.27 & 28.09 & 3.87 & 22.75 ${\textbf{(0.52)}}$ & 30.36 / \textbf{28.10} & 3.90 / \textbf{3.87} \\
			\hline
			SENet-101 & 22.37 & 49.33 & 7.60 & 21.82 ${\textbf{(0.55)}}$ & 51.60 / \textbf{49.36} & 7.66 / \textbf{7.60} \\
			\hline
		\end{tabular}
	\end{center}
	\label{totaltab}
\end{table*}

We add our time stepping controller on ResNet families with different depths and different variants, including ResNeXt \cite{xie2016aggregated} and SENet \cite{hu2017}, to validate the ability of improving performance. As shown in Table \ref{totaltab}, for fair comparison, we re-implement the baseline methods and most of our re-implementation performance are superior to the reported numbers. 

When armed with our time stepping controller, it is shown that ResNets in different depths consistently have a 0.6\%-0.8\% improvement on performance. ResNet-50 has the largest accuracy gain. We see that our methods introduce a small number of parameters and computation for training, and nearly no considerable extra cost for inference. 

We also add our time stepping controller on ResNet variants to test the scalability. The implementation of ResNeXt is similar to ResNet. It is shown that our method is also effective to ResNeXts. For ResNeXt-50, it has a top-1 accuracy improvement of 0.61\%, while ResNeXt-101 has a larger gain of 0.71\%. 

Compared with feature operating modules, such as the channel-wise attention in SENet, our method may not have strong advantages for improving the performance, because the attention methods are data dependent, while ours are searching for adaptive step sizes, which are independent from features and belong to structural information. In spite of this, we note that TSC-ResNet-101 (22.24\% top-1 error rate) has surpassed the performance of SENet-101 (22.37\% top-1 error rate) using less parameters and computation. Our performance gain has little extra cost in inference, which cannot be realized by the feature dependent method SENet. We also show that our time stepping controller is compatible with SENets, even if they share a similar propagation formulation as Eq. (\ref{res_prop}). Our method reduces a top-1 error rate of 0.52\% for SENet-50, and 0.55 \% for SENet-101. 

From Table \ref{totaltab}, we observe that a deeper model benefits more from our time stepping controller in general. We believe that the reason is that a deeper network suffers more from the effects of step sizes. As indicated by Eq. (\ref{res_stab}), when depth increases, the cumulative influence of the spectral norm of weight matrices and step sizes become larger. In this case, an inappropriate step size would heavily impede the stability and performance of the network. It is in line with our intuition that deeper networks have more difficulties of training. Our method has an adaptive time stepping controller to adjust the steps sizes jointly with the network training, and thus helps more for deeper networks.

\begin{figure}[t]
	\centering
	\includegraphics[width=0.9\linewidth]{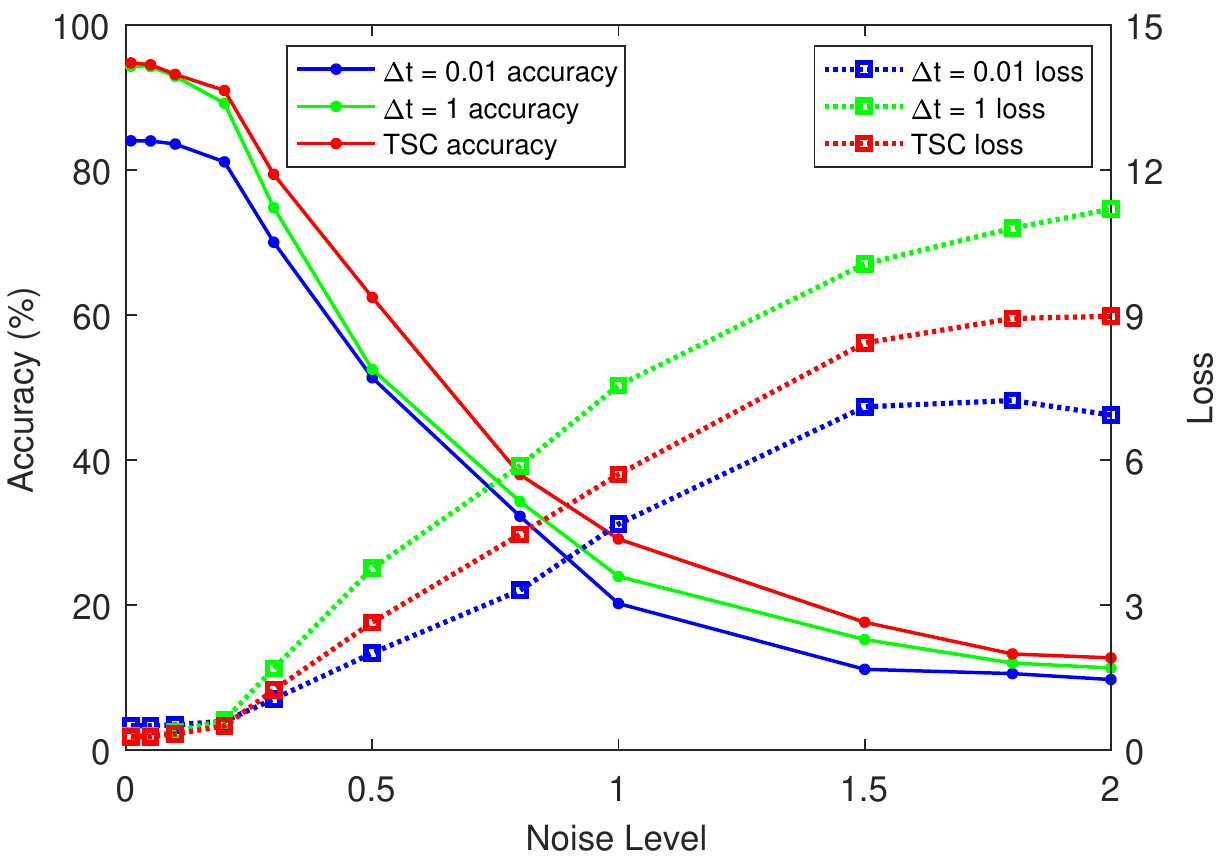}
	\caption{Under different noise level, the inference accuracy and loss of ResNets with step sizes 1, 0.01, and our time stepping controller (TSC) on CIFAR-10 test set.}
	\label{noise}
\end{figure}

\subsection{Improving the Stability}

\begin{figure}[t]
	\centering
	\includegraphics[width=0.75\linewidth]{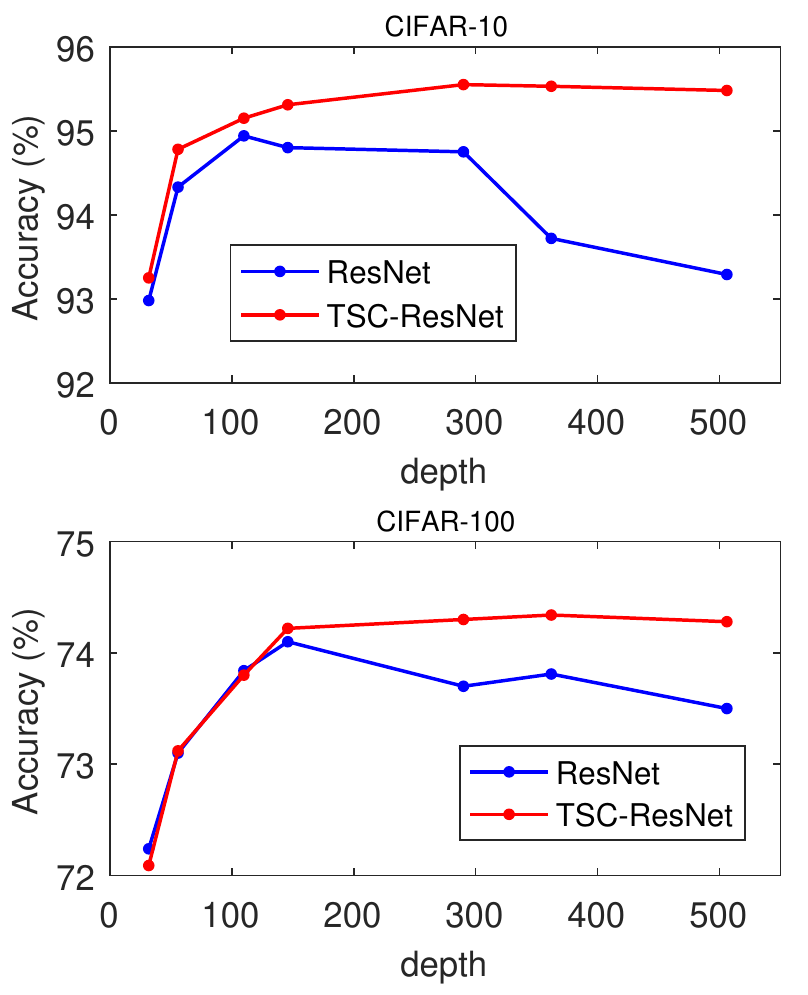}
	\caption{Comparison of ResNets and TSC-ResNets on CIFAR-10 and CIFAR-100 with increasing depths (32, 56, 110, 146, 290, 362 and 506 layers).}
	\label{increasing_depth}
\end{figure}

In order to test the ability of our time stepping controller to improve the stability, we conduct experiments to show the controller's robustness to perturbations and increasing depths. 

We train ResNet-56 on CIFAR-10 with different step sizes (0.01 and 1), and our time stepping controller. After training, we inject different level Gaussian noise to the input for inference on the test set. The level of perturbation is decided by the standard deviation of the synthetic Gaussian noise. As shown in Figure \ref{noise}, when the noise level increases, the accuracies of $\Delta t=0.01$ and $\Delta t=0.01$ both drop quickly. The loss of $\Delta t=1$ has a sharp increment compared with that of $\Delta t=0.01$. It is in line with our analysis in Eq. (\ref{res_stab}) that a small step size in each layer helps to bound the adverse effect caused by perturbations. However, the performance of $\Delta t=0.01$ is significantly worse than $\Delta t=1$. As a comparison, our time stepping controller offers adaptive step sizes. It is shown that TSC has a moderate loss increment with the noise level rising. Although TSC has a higher loss than $\Delta t=0.01$ when noise level is larger than 0.5, the accuracies of TSC are consistently better than $\Delta t=1$ and $\Delta t=0.01$. This demonstrates that our stepping controller improves the ResNet robustness to perturbations, and offers a good trade-off between the stability and performance.

We further test the robustness of our method to increasing depths. As shown in Figure \ref{increasing_depth}, we train ResNet and TSC-ResNet with different depths on the CIFAR-10 and CIFAR-100 datasets. For shallow residual networks, TSC-ResNet and ResNet have similar performances. When depth increases, the accuracies of ResNets approach a plateau, and then have a drop for very deep networks. A similar result is also reported in \cite{zhang2019towards}. As a comparison, the performance of TSC-ResNet is more stable. It keeps a slow increment in accuracy for large depths. We note that the performance gap between TSC-ResNet and ResNet is larger for deeper networks in general, which is consistent with our observation on the ImageNet experiments in the prior section. This demonstrates our analysis that deeper networks suffer from an accumulated instability, and thus gain more benefits from our adaptive time stepping controller.

\subsection{Analysis}

\begin{table}[!t]
	\renewcommand\arraystretch{1.1}
	\caption{The final optimized time step sizes (averaged along the channel dimension) of TSC-ResNet-50 and TSC-ResNet-101 in different layers. ``C-3\_1'' denotes the first residual block in the Conv-3 size stage.}
	\begin{center}
		\setlength{\tabcolsep}{4pt}
		\begin{tabular}{l|c|c|c|c|c|c}
			\hline
			Models & C-3\_1 & C-4\_1 & C-4\_2 & C-5\_1 & C-5\_2 & C-5\_3 \\
			\hline
			TSC-50 & 0.510 & 0.500 & 0.731 & 0.835 & 0.901 & 0.926 \\
			TSC-101 & 0.508 & 0.513 & 0.657 & 0.854 & 0.927 & 0.950 \\
			\hline
		\end{tabular}
	\end{center}
	\label{optimized}
\end{table}

As shown in Table \ref{optimized}, we average the optimized time step size vectors of TSC-ResNet-50 and TSC-ResNet-101 in different layers. We found that the step sizes in shallow layers of both TSC-ResNet-50 and TSC-ResNet-101 are centered around 0.5. For the layers in the last size stage (Conv-5\_1, 5\_2, 5\_3), the step sizes are approaching 1. Conv-3\_1 and 4\_1 keep the initial state and are mildly affected by our method, because each of our time stepping controller consider previous steps but they correspond to the first time step in each size stage. Besides, the shallow layers should have small step sizes to avoid accumulated instability. But for layers Conv-5\_1, 5\_2, and 5\_3, they should enlarge step sizes to achieve strong feature transformations for the final representation. 

We also observe that, deeper layers in Conv-5\_1, 5\_2, and 5\_3 converge to larger step sizes. For the same layers in Conv-5\_1, 5\_2, and 5\_3, TSC-ResNet-101 converges to a larger step size. It reveals that deeper layer or deeper model requires a larger step size.  It is in line with our experimental results that deeper models gain more benefits from our method in general. We believe that the reason lies in that deeper network corresponds to longer evolution time and suffer more from inappropriate time stepping. We also conjecture that our adaptive time stepping’s effects on shallow layers mainly ensure stability, while the adjustments for deep layers contribute to the performance gains.

\subsection{Extension to Non-residual Networks}
Our analyses and the development of our time stepping controller are based on the correspondence between residual networks and discrete dynamical systems. In order to test the scalability of our method to other networks, we add the controller to two non-residual network structures, DenseNet \cite{huang2016densely} and CliqueNet \cite{Yang_2018_CVPR}. The results are shown in Appendix C.

\section{Conclusion}

In this study, we use the correspondence between residual networks and discrete dynamical systems to unravel the physics of ResNets. We analyze the stability condition of the Euler method and ResNet propagation, and point out the effects of step sizes on the stability and performance of ResNets. Inspired by the adaptive time stepping in numerical methods of ODEs, we develop an adaptive time stepping controller that is dependent on the parameters of the network and aware of previous steps to adaptively adjust the step sizes and evolution time. Experiments on ImageNet, CIFAR-10, and CIFAR-100 show that our method is able to improve both performance and stability, without introducing much overhead in inference phase. Our method can also be applied to other non-residual network structures.

\section{Ackonwledgement}


Z. Lin is supported by NSF China (no.s 61625301 and 61731018), Major Scientific Research Project of Zhejiang Lab (no.s 2019KB0AC01 and 2019KB0AB02), and Beijing Academy of Artificial Intelligence. J. Wu is supported by the Fundamental Research Funds of Shandong University and SenseTime Research Fund for Young Scholars.

\bibliographystyle{aaai}\bibliography{tsc_aaai.bib}

\section{Supplementary Material}

\subsection{Appendix A: Proof of Proposition 1}

\begin{prop}
	Consider a ResNet with $D$ residual blocks and variable step sizes $\Delta t_j$ for each step. Let $\epsilon$ be the perturbation coming from noise or adversary and satisfies $||\mathbf{y}^{\epsilon}_0-\mathbf{y}_0||=\epsilon$. We have:
	\begin{equation}
	||\mathbf{y}^{\epsilon}_D-\mathbf{y}_D||\le \epsilon \cdot \prod_{j=0}^{D-1}(1+||\mathbf{W}_j||_2\Delta t_j),
	\label{res_stab_supp}
	\end{equation}
	where $||\mathbf{W}_j||_2$ denotes the spectral norm of weight matrix in each residual block.
\end{prop}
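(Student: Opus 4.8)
The plan is to proceed by induction on the block index $j$, tracking the scalar quantity $\delta_j := \|\mathbf{y}_j^{\epsilon} - \mathbf{y}_j\|$, the norm of the discrepancy between the perturbed and the clean trajectories. Subtracting the two update rules $\mathbf{y}_{j+1} = \mathbf{y}_j + \mathcal{F}(\mathbf{y}_j, \mathbf{W}_j)\Delta t_j$ for the two trajectories gives $\mathbf{y}_{j+1}^{\epsilon} - \mathbf{y}_{j+1} = (\mathbf{y}_j^{\epsilon} - \mathbf{y}_j) + \Delta t_j\bigl(\mathcal{F}(\mathbf{y}_j^{\epsilon}, \mathbf{W}_j) - \mathcal{F}(\mathbf{y}_j, \mathbf{W}_j)\bigr)$, and the triangle inequality yields $\delta_{j+1} \le \delta_j + \Delta t_j\,\|\mathcal{F}(\mathbf{y}_j^{\epsilon}, \mathbf{W}_j) - \mathcal{F}(\mathbf{y}_j, \mathbf{W}_j)\|$.

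The crucial step is to bound the residual increment by a multiple of $\delta_j$. Writing $\mathcal{F} = \sigma(\mathrm{BN}(\mathbf{W}_j \mathbf{y}_j))$ and using that the activation $\sigma$ (e.g., ReLU) is $1$-Lipschitz and that batch normalization, viewed as its affine rescaling at inference, does not amplify norms, we obtain $\|\mathcal{F}(\mathbf{y}_j^{\epsilon}, \mathbf{W}_j) - \mathcal{F}(\mathbf{y}_j, \mathbf{W}_j)\| \le \|\mathbf{W}_j(\mathbf{y}_j^{\epsilon} - \mathbf{y}_j)\| \le \|\mathbf{W}_j\|_2\,\delta_j$ by the definition of the spectral norm. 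Substituting back gives the one-step estimate $\delta_{j+1} \le (1 + \|\mathbf{W}_j\|_2 \Delta t_j)\,\delta_j$.

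Iterating this inequality from $j = 0$ to $j = D-1$, together with the base case $\delta_0 = \|\mathbf{y}_0^{\epsilon} - \mathbf{y}_0\| = \epsilon$, produces $\|\mathbf{y}_D^{\epsilon} - \mathbf{y}_D\| \le \epsilon \prod_{j=0}^{D-1}(1 + \|\mathbf{W}_j\|_2 \Delta t_j)$, which is exactly the claimed bound~(\ref{res_stab}).

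I expect the only genuine subtlety to be the Lipschitz bound on $\mathcal{F}$: one must state explicitly the assumption that $\sigma$ is $1$-Lipschitz and that the batch-normalization rescaling does not increase norms, since it is this hypothesis that converts the spectral norm of $\mathbf{W}_j$ into a bona fide Lipschitz constant of the residual branch. Once that is granted, the remainder is a routine discrete Gr\"onwall-type induction. Were the BN scale parameters allowed to be large, the bound would acquire an extra product of those scales, but the structural conclusion --- that perturbations accumulate multiplicatively across layers via the per-layer factors $1 + \|\mathbf{W}_j\|_2 \Delta t_j$ --- would be unchanged.
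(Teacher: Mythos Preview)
Your proposal is correct and follows essentially the same route as the paper: subtract the two update rules, apply the triangle inequality, bound the residual increment by $\|\mathbf{W}_j\|_2\,\delta_j$, and iterate. The only cosmetic difference is that the paper justifies the Lipschitz bound via a local linearization of ReLU (writing it as a diagonal $\{0,1\}$ matrix $\mathbf{A}$ with $\|\mathbf{A}\|_2\le 1$, and silently dropping BN), whereas you invoke the global $1$-Lipschitz property of $\sigma$ directly; your version is in fact slightly cleaner.
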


\begin{proof}
	We have the propagation of ResNet in the $d$-th layer as:
	\begin{equation}
	\mathbf{y}_{d+1}=\mathbf{y}_d+\mathcal{F}(\mathbf{y}_d, \mathbf{W}_d)\Delta t_d.
	\end{equation}
	Denote the perturbation in the $d+1$-th layer as $\epsilon_{d+1}=||\mathbf{y}^{\epsilon}_{d+1}-\mathbf{y}_{d+1}||$. Then we have,
	\begin{align}
	\epsilon_{d+1} & =||\mathbf{y}^{\epsilon}_{d}-\mathbf{y}_{d}+(\mathcal{F}(\mathbf{y}^{\epsilon}_d, \mathbf{W}_d)-\mathcal{F}(\mathbf{y}_d, \mathbf{W}_d))\Delta t_d|| \notag\\
	& \leq \epsilon_{d} + ||\mathcal{F}(\mathbf{y}^{\epsilon}_d, \mathbf{W}_d)-\mathcal{F}(\mathbf{y}_d, \mathbf{W}_d)||\Delta t_d.
	\label{epsi}
	\end{align}
	
	We simplify the residual branch in the $d$-th layer as a composite function composed of the linear operator $\mathbf{W}_d$, and the ReLU non-linear activation $\mathbf{A}$, which is a diagonal matrix. The value in $\mathbf{A}$ equals to one if the corresponding element in $\mathbf{y}_d$ is positive, otherwise equals to zero. As pointed out by \cite{yoshida2017spectral}, the non-linearity in neural networks usually comes from piecewise linear functions, such as ReLU, maxpooling, etc. Because the perturbation $\epsilon_d$ is small, $\mathbf{y}^{\epsilon}_d$ can be considered as a neighborhood of $\mathbf{y}_d$, and the residual function behaves as a linear operator near $\mathbf{y}_d$, then we have:
	\begin{align}
	& \frac{||\mathcal{F}(\mathbf{y}^{\epsilon}_d, \mathbf{W}_d)-\mathcal{F}(\mathbf{y}_d, \mathbf{W}_d)||}{||\mathbf{y}^{\epsilon}_{d}-\mathbf{y}_{d}||} =\frac{||\mathbf{A}\mathbf{W}_d(\mathbf{y}^{\epsilon}_d-\mathbf{y}_d)||}{||\mathbf{y}^{\epsilon}_{d}-\mathbf{y}_{d}||} \notag\\
	& \leq ||\mathbf{A}\mathbf{W}_d||_2 \leq ||\mathbf{A}||_2||\mathbf{W}_d||_2 \leq ||\mathbf{W}_d||_2
	\label{aw}
	\end{align}
	where $||\cdot||_2$ denotes the matrix spectral norm defined as:
	\begin{equation}
	||X||_2=\max_{u\in\mathbb{R}^n,u\ne 0}\frac{||Xu||_2}{||u||_2},
	\end{equation}
	which corresponds to the largest singular value of $X$. So we have $||\mathbf{A}||_2\le1$, and Eq. (\ref{aw}) holds. 
	
	Consequently, it follows from Eq. (\ref{epsi}) and Eq. (\ref{aw}) that:
	\begin{equation}
	\epsilon_{d+1}\leq\epsilon_{d}(1+||\mathbf{W}_d||_2\Delta t_d).
	\label{eq6}
	\end{equation}
	Apply Eq. (\ref{eq6}) to all layers in the network, then we have Eq. (\ref{res_stab_supp}), which concludes the proof.
	$\hfill\blacksquare$  
\end{proof}

\newpage
\subsection{Appendix B: TSC-ResNet Structure on ImageNet}

We show the structures of TSC-ResNet-34 (without bottleneck) and TSC-ResNet-50 (with bottleneck) in Table \ref{structure} in the next page.

\begin{table}[!t]
	\renewcommand\arraystretch{1.4}
	\caption{Error rates on the CIFAR-10 and CIFAR-100 datasets to test the application of our time stepping controller to two non-residual network structures.}
	\begin{center}
		\begin{tabular}{l|c|c}
			\hline
			Methods & CIFAR-10 & CIFAR-100 \\ 
			\hline
			DenseNet  & 5.24 & 24.42\\ 
			\hline
			TSC-DenseNet & 5.10 & 23.35 \\
			\hline
			CliqueNet & 5.03 & 22.80\\
			\hline
			TSC-CliqueNet & 4.79 & 22.04\\
			\hline
		\end{tabular}
	\end{center}
	\label{non-residual}
\end{table}

\begin{table*}[ht]
	\renewcommand{\arraystretch}{1.1}
	\caption{A diagram of the structures TSC-ResNet-34 and TSC-ResNet-50. The fc layers in each block are newly introduced by our method. The first and last row in the fc layers refer to the input and output transformation. The middle row indicates the layers in Eq. (11) of our paper.}
	\begin{center}
		\begin{tabular}{c|c|c}
			\hline
			Output size & TSC-ResNet-34 & TSC-ResNet-50\\
			\hline
			112$\times$ 112 & \multicolumn{2}{c}{conv, 7$\times$ 7,stride 2}\\
			\hline
			56$\times$56 & \multicolumn{2}{c}{max pool, 3$\times$3, stride 2}\\
			\hline
			56$\times$56 & conv-$ \left(
			\begin{matrix}
			3\times3, 64 \\
			3\times3, 64 \\
			\end{matrix}  \right)\times 3 $,
			\ fc-$\left(
			\begin{matrix}
			\left[3\times3\times128,16\right]\times 1 \\
			\left[32,16\right]\times 4 \\
			\left[16,64\right]\times 1
			\end{matrix}
			\right)
			$
			& conv-$\left(
			\begin{matrix}
			1\times1, 64 \\
			3\times3, 64 \\
			1\times1, 256
			\end{matrix}\right)\times 3 $,
			\ fc-$\left(
			\begin{matrix}
			\left[320,32\right]\times 1\\
			\left[64,32\right]\times 4\\
			\left[32,256\right]\times 1
			\end{matrix}
			\right)
			$\\
			\hline
			28$\times$28 & conv-$ \left(
			\begin{matrix}
			3\times3, 128 \\
			3\times3, 128 \\
			\end{matrix}  \right)\times 4 $,
			\ fc-$\left(
			\begin{matrix}
			\left[3\times3\times256,32\right]\times 1 \\
			\left[64,32\right]\times 4 \\
			\left[32,128\right]\times 1
			\end{matrix}
			\right)
			$
			& conv-$\left(
			\begin{matrix}
			1\times1, 128 \\
			3\times3, 128 \\
			1\times1, 512
			\end{matrix}\right)\times 4 $,
			\ fc-$\left(
			\begin{matrix}
			\left[640,64\right]\times 1\\
			\left[128,64\right]\times 4\\
			\left[64,512\right]\times 1
			\end{matrix}
			\right)
			$\\
			\hline
			14$\times$14 & conv-$ \left(
			\begin{matrix}
			3\times3, 256 \\
			3\times3, 256 \\
			\end{matrix}  \right)\times 6 $,
			\ fc-$\left(
			\begin{matrix}
			\left[3\times3\times512,64\right]\times 1 \\
			\left[128,64\right]\times 4 \\
			\left[64,256\right]\times 1
			\end{matrix}
			\right)
			$
			& conv-$\left(
			\begin{matrix}
			1\times1, 256 \\
			3\times3, 256 \\
			1\times1, 1024
			\end{matrix} \right)\times 6 $,
			\ fc-$\left(
			\begin{matrix}
			\left[1280,128\right]\times 1\\
			\left[256,128\right]\times 4\\
			\left[128,1024\right]\times 1
			\end{matrix}
			\right)
			$\\
			\hline
			7$\times$7 & conv-$ \left(
			\begin{matrix}
			3\times3, 512 \\
			3\times3, 512 \\
			\end{matrix}  \right)\times 3 $,
			\ fc-$\left(
			\begin{matrix}
			\left[3\times3\times1024,128\right]\times 1 \\
			\left[256,128\right]\times 4 \\
			\left[128,512\right]\times 1
			\end{matrix}
			\right)
			$
			& conv-$ \left(
			\begin{matrix}
			1\times1, 512 \\
			3\times3, 512 \\
			1\times1, 2048
			\end{matrix}\right)\times 3 $,
			\ fc-$\left(
			\begin{matrix}
			\left[2560,256\right]\times 1\\
			\left[512,256\right]\times 4\\
			\left[256,2048\right]\times 1
			\end{matrix}
			\right)
			$\\
			\hline
			1$\times$1 & \multicolumn{2}{c}{global average pool, fc-1000, softmax}\\
			\hline
		\end{tabular}
	\end{center}
	\label{structure}
\end{table*}

\subsection{Appendix C: Extension to Non-residual Networks}

Our analyses and the development of our time stepping controller are based on the correspondence between residual networks and discrete dynamical systems. In order to test the scalability of our method to other networks, we add the controller to two non-residual network structures, DenseNet \cite{huang2016densely} and CliqueNet \cite{Yang_2018_CVPR}. Although the two network structures do not directly have the identity mapping as a short cut path, they introduce the dense connection, by which the addition in ResNet propagation is replaced with concatenation. Similarly, we use the parameters that generate the new feature at each step as the input of our controller. The new feature is channel-wise multiplied with the output step size, and then concatenated with old features to form the current layer. For DenseNet, we use the setting of $k=12$ and $d=40$, where $k$ is the number of channels for each new feature, and $d$ denotes the network depth. For CliqueNet, we use their setting of $k=36$ and $T=12$, where $k$ has the same meaning as DenseNet, and $T$ is the number of layers in the network. 

As shown in Table \ref{non-residual}, when armed with our time stepping controller, both DenseNet and CliqueNet have performance improvements on the CIFAR-10 and CIFAR-100 datasets. Although the two networks do not have the identity mapping to be closely connected with the discretization of dynamical systems, our experiments indicate that our method is scalable and can be applied to other non-residual networks.

\end{document}